\newtheorem{theorem}{Theorem}
\title{GRSN: Gated Recurrent Spiking Neurons for POMDPs and MARL}
\author{
Lang Qin$^1$
\and
Ziming Wang$^1$\and
Runhao Jiang$^1$\and
Rui Yan$^2$\And
Huajin Tang$^1$\thanks{Corresponding author}\\
\affiliations
$^1$College of Computer Science and Technology, Zhejiang University, Hangzhou, China\\
$^2$College of Computer Science, Zhejiang University of Technology, Hangzhou, China\\
\emails
qinl@zju.edu.cn,
zi\_ming\_wang@outlook.com,\\
15520816169@163.com,
ryan@zjut.edu.cn,
htang@zju.edu.cn
}
\begin{document}

\maketitle

\begin{abstract}
    Spiking neural networks (SNNs) are widely applied in various fields due to their energy-efficient and fast-inference capabilities. Applying SNNs to reinforcement learning (RL) can significantly reduce the computational resource requirements for agents and improve the algorithm's performance under resource-constrained conditions. However, in current spiking reinforcement learning (SRL) algorithms, the simulation results of multiple time steps can only correspond to a single-step decision in RL. This is quite different from the real temporal dynamics in the brain and also fails to fully exploit the capacity of SNNs to process temporal data. In order to address this temporal mismatch issue and further take advantage of the inherent temporal dynamics of spiking neurons, we propose a novel temporal alignment paradigm (TAP) that leverages the single-step update of spiking neurons to accumulate historical state information in RL and introduces gated units to enhance the memory capacity of spiking neurons. Experimental results show that our method can solve partially observable Markov decision processes (POMDPs) and multi-agent cooperation problems with similar performance as recurrent neural networks (RNNs) but with about 50\% power consumption.
\end{abstract}

\section{Introduction}
    Spiking neural networks (SNNs) possess unique spatiotemporal dynamics and all-or-none firing characteristics, enabling them to perform low-power and high-speed inference on neuromorphic hardware \cite{Roy2019,Pei2019}. With the continuous improvement of SNN training algorithms, the learning capability and network scalability of SNNs are gradually approaching those of artificial neural networks (ANNs) \cite{DBLP:conf/ijcai/LianSLW0T23,DBLP:conf/icml/WangJL0T23,qin_attention-based_2023}. Therefore, in recent years, there have been more and more efforts to explore SNNs as alternatives to ANNs in the field of deep reinforcement learning (DRL), aiming to reduce the computational cost and inference latency of intelligent agents.

    In SNN learning algorithms, firing rates are often used to calculate continuous values due to the discrete nature of spikes. However, the firing rates, which are in the range of 0 to 1, are difficult to map onto continuous values (such as continuous actions and Q-values) in RL that do not have value range limitations. There are generally three types of approaches to address or avoid this issue in order to achieve deep spiking reinforcement learning (SRL): (1) ANN-SNN conversion methods for RL; (2) a hybrid framework for the actor-critic (AC) method; and (3) directly trained SNNs for RL.

    \begin{figure}[t]
        \centerline{\includegraphics[width=1.0\linewidth]{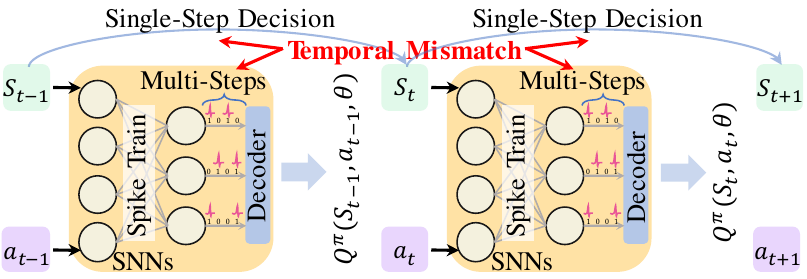}}
	   \caption{Temporal mismatch issue in SRL algorithms. The figure shows the basic model of SRL methods; it uses spike results of multiple time steps to decode for one-step value function calculation or action selection in RL.}
	   \label{Fig::TemporalMismatch}
    \end{figure}
    
    The conversion algorithms \cite{patel2019improved,tan2021strategy} avoid the aforementioned mapping problem by using a pre-trained ANN-RL model to transform the SNN model, thus avoiding direct involvement in the learning process of RL algorithms. However, the conversion algorithms often require high simulation time steps due to the computational precision requirement, resulting in increased energy consumption and inference delay. To reduce simulation steps, a hybrid framework \cite{tang2020reinforcement,DBLP:conf/corl/TangKYM20,zhang2022multiscale} for the AC method has been proposed. This approach combines spiking actor networks and artificial critic networks and utilizes a gradient descent algorithm for collaborative training. The critic networks that require precise calculations are completed by ANN, while SNN is only used for selecting actions. This hybrid framework successfully reduces the need for simulation time steps, thereby reducing energy consumption and inference delay.

    Clearly, the hybrid framework method is only applicable to RL algorithms based on the AC method. Additionally, similar to conversion algorithms, the hybrid framework method still relies on ANN to accomplish RL tasks. To enhance the generalizability of the algorithms and eliminate reliance on ANN, directly trained SNNs for RL \cite{chen2022deep,liu2021human,10.3389/fnins.2022.953368,DBLP:conf/ijcai/QinYT23} have been proposed. These algorithms directly train SNNs through surrogate gradient learning \cite{DBLP:journals/spm/NeftciMZ19}, and use suitable coders to handle computations involving continuous values. These methods do not rely on ANNs and can complete tasks in a few time steps. However, such methods rely more on the design and selection of coders and also have a trade-off between performance and latency.
    
    Reinforcement learning algorithms are fundamentally designed to address sequential decision problems; they can synergize effectively with SNNs that inherently capture temporal sequence dynamics. However, the aforementioned existing SRL methods still use multiple time steps to decode the spike information to obtain continuous values, which leads to a serious temporal mismatch problem (Figure~\ref{Fig::TemporalMismatch}). This temporal mismatch between multiple time steps and single-step decisions not only deviates from the real dynamics in the brain but also impairs the ability of SNNs to process temporal data. 
   
    To address this issue, we propose a novel SRL paradigm that aligns the single-step updating of spiking neurons with the single-step decisions in RL, enabling the sequence decision problem to be solved within a whole simulated time window. Considering the weak temporal correlations between time steps of the original spiking neurons and the inability to guarantee accuracy after temporal alignment, we further introduced gated units to enhance the long- and short- term memory capabilities of neurons, ensuring the effectiveness of the proposed paradigm. In order to test the performance of the proposed gated recurrent spiking neurons (GRSN) and temporal alignment paradigm (TAP), we conducted experiments in partially observable (PO) and multi-agent environments. Our main contributions are summarized as follows:

    \begin{itemize}
        \item We pointed out the issue of temporal mismatch in current SRL algorithms and proposed a novel temporal alignment paradigm (TAP) for SRL to solve this issue.
        \item We designed a novel gated recurrent spiking neuron (GRSN) with enhanced long- and short-term memory capabilities, which can be applied to the proposed TAP.
        \item To our best knowledge, this work is the first to use SNNs to address POMDPs and multi-agent reinforcement learning (MARL) problems. We verified the effectiveness of SNNs in PO and multi-agent environments.
        \item Experimental results show that GRSN can outperform original spiking neurons in benchmark environments and can achieve similar performance as RNN-RL with about 50\% energy consumption.
    \end{itemize}

\section{Related Works}
\subsection{Spiking Reinforcement Learning} 
The development of SRL can be mainly divided into three periods: (1) the basic research of SNNs and RL \cite{DBLP:journals/ml/Williams92,DBLP:journals/ijon/BohteKP02}; (2) the synaptic-plasticity-based SRL algorithm \cite{seung2003learning,Urbanczik2009,fremaux2010functional}; and (3) the combination of deep RL and SNNs \cite{DBLP:journals/corr/abs-2108-10078,liu2021human,DBLP:conf/ijcai/QinYT23}. Early-stage works often study the combination of synaptic plasticity and RL theory, which aims to reveal how reward mechanisms in the brain correlate and combine with RL algorithms. In the later stage, with the development of DRL, the SRL algorithm focuses on applying SNNs to DRL, aiming for better performance.

\subsection{Recurrent Spiking Neural Networks}
Due to the inherent temporal nature of spiking neurons, there have been many attempts to use recurrent spiking neural networks for temporal data processing. Some work \cite{DBLP:conf/icons2/Lotfi-RezaabadV20,DBLP:journals/natmi/RaoPWM22} has designed spike-based long short-term memory networks and tested them on audio or text datasets. Other work \cite{DBLP:conf/icons2/YinCB20,DBLP:conf/aaai/Ponghiran022,DBLP:journals/tnn/LiuLPWYSRP23} is to explore the long-term dependence of spiking neurons and complete structure optimization or time series forecasting tasks.

\section{Preliminaries}
\subsection{Notations}
For all variables, superscripts refer to the number of layers in the neural network, and subscripts refer to their time steps in the time window. For example, $\bm{u}^l_t$ denotes the membrane potential in layer $l$ at the $t$-th time step. We follow the conventions representing vectors and matrix with bold italic letters and bold capital letters, respectively, such as $\bm{o}$ and $\bm{W}$. We use $\odot$ to signify element-wise multiplication for two vectors.

\begin{figure*}[t]
    \centerline{\includegraphics[width=1.0\linewidth]{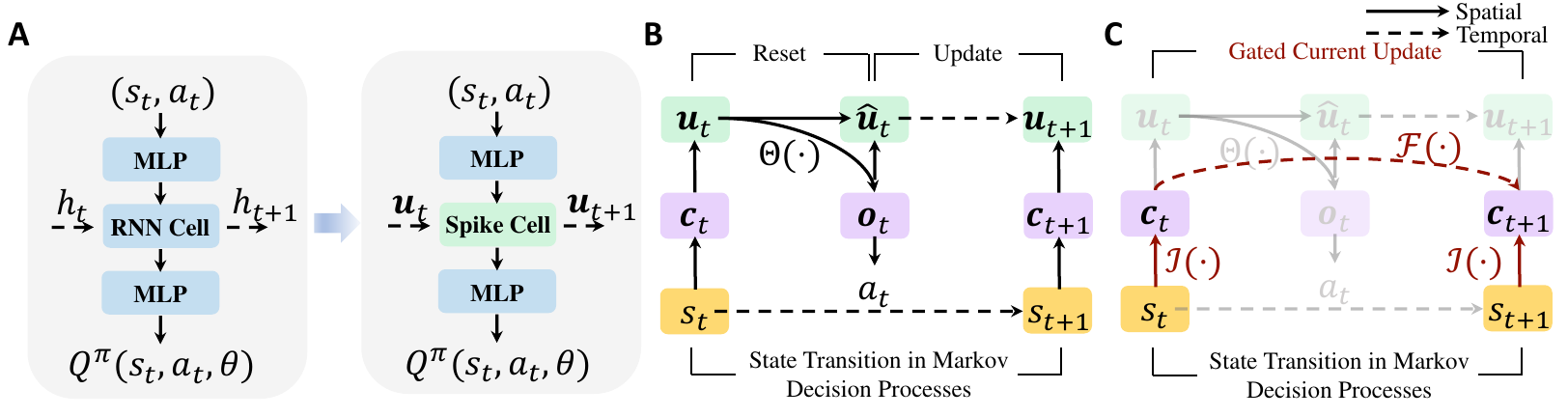}}
	\caption{Overview of the proposed TAP and GRSN. \textbf{A.} Basic model of RNN-RL and SNN-RL algorithms under temporal alignment paradigm. Spike cell is a special form of single-step update for spiking neurons, which maintains the update of membrane potential $\bm{u}_t$ and outputs the $Q$ function. The multilayer perceptron (MLP) serves as the embedding or coding layer here. \textbf{B.} Detailed update dynamics for the parameters of spiking neurons. The state $s_t$ is encoded into the input current $\bm{c}_t$ and feedforward to the spiking neurons. The neurons calculate the current membrane potential $\bm{u}_t$ based on $\bm{c}_t$ and $\bm{u}_{t-1}$, and determine whether to emit spikes $\bm{o}_t$. The output spikes are decoded and used to obtain the action $a_t$, and then the next state $s_{t+1}$ is achieved based on the state transition function. \textbf{C.} Dynamics of GRSN. The red part shows the added recurrent connection and gated unit. $\mathscr{F}$ and $\mathscr{I}$ represent the forget gates and input gates, respectively. GRSN also follows the temporal alignment paradigm, with each time step corresponding to a single-step state transition.}
	\label{Fig::TemporalAlignmentParadigm}
\end{figure*}

\subsection{Leaky Integrate-and-Fire Model} \label{sec::lif}
Different from traditional neural networks, SNNs use binary spikes as information carriers. In order to compensate for the lack of binary spikes in information expression, the time dimension, or latency, is introduced into SNNs. SNNs accept event streams as input, and the forward propagation of SNNs is repeated for $T$ time steps in the temporal dimension to calculate the output spike trains. Therefore, the basic computing unit of SNNs, spiking neurons, has unique spatio-temporal dynamics. Here we introduce the widely used Leaky Integrate-and-Fire (LIF) neuron \cite{10.3389/fnins.2018.00331} as the benchmark in this work. In each time step $t$, the membrane potential $\bm{u}^l_t$ of spiking neurons at the $l$-th layer will integrate the input current $\bm{c}^l_t$ and the decay voltage $\beta \bm{\hat{u}}^l_{t-1}$:
\begin{equation} \label{eq::Update}
    \bm{u}^l_t =\beta \bm{\hat{u}}^l_{t-1} + (1-\beta)\bm{c}^l_t
\end{equation}
\begin{equation} \label{eq::Current}
    \bm{c}^l_t =\bm{W}^l \bm{o}^{l-1}_t + \bm{b}^l
\end{equation}
Membrane potential will be reset to $u_r$ (default set to $0$) when a spike is emitted at the last time step:
\begin{equation} \label{eq::Reset}
    \begin{split}
        \bm{\hat{u}}^l_t&=u_r\bm{o}^l_t+(1-\bm{o}^l_t)\bm{u}^l_t
    \end{split}
\end{equation}
The neurons will emit output spikes $\bm{o}^l_t$ whenever the membrane potential $\bm{u}^l_t$ exceeds the threshold $\vartheta$:
\begin{equation} \label{eq::Firing}
    \begin{split}
        \bm{o}^l_t&=\Theta(\bm{u}^l_t-\vartheta)
    \end{split}
\end{equation}
where $\Theta(x)$ is the Heaviside function:
\begin{equation} \label{eq::Heaviside}
    \begin{split}
        \Theta(x)=\left\{ \begin{array}{l} 
            1, if \quad x \ge 0\\
            0, otherwise
                        \end{array} \right.
    \end{split}
\end{equation}

\subsection{Surrogate Gradient}
Due to the non-differentiability of the binary firing function $\Theta(x)$, SNNs often employ surrogate gradients \cite{DBLP:journals/spm/NeftciMZ19} during back-propagation (BP). We adopt the arc-tangent function to replace the derivative of the binary firing function:
\begin{equation} \label{eq::SG}
    \begin{split}
        \Theta'(x) &\triangleq h'(x) = \frac{\alpha}{2[1 + (\frac{\pi}{2}\alpha x)^2]}
    \end{split}
\end{equation}
where $\alpha$ is a hyper-parameter that controls the width of the surrogate function.
With the surrogate gradients for binary firing functions, we can directly use the BP method to train SNNs.

\section{Method} \label{sec::Method}
\subsection{Temporal Alignment Paradigm}
Reinforcement learning tasks can be modeled as Markov decision processes (MDP) ($\mathcal{S},\mathcal{A},\mathcal{R},p,\gamma$), with state space $\mathcal{S}$, action space $\mathcal{A}$, scalar reward function $\mathcal{R}$, transition dynamics $p$, and discount factor $\gamma$ \cite{sutton2018reinforcement}. RL agents are controlled by policy $\pi$, with the goal of maximizing the expected discounted return $\mathbb{E}_{\pi}[\sum_{t=0}^{\infty}\gamma^t r_{t+1}]$. According to the Bellman equation \cite{bellman1966dynamic}, we can obtain the iteration form of the value function:
\begin{equation} \label{eq::BellmanEquation}
    \begin{split}
        V^\pi(s_t)=\mathbb{E}_\pi[R_{t+1}+\gamma V^\pi(s_{t+1})]
    \end{split}
\end{equation}
When the policy is determined, Eq~\ref{eq::BellmanEquation} can be simplified as:
\begin{equation} \label{eq::IterativeBellmanEquation}
    \begin{split}
        V(s_t)=\frac{1}{\gamma}V(s_{t-1}) + \frac{-1}{\gamma}\mathbb{E}[R_t]
    \end{split}
\end{equation}
Observing Eq~\ref{eq::IterativeBellmanEquation} and Eq~\ref{eq::Update}, we can find that the state transitions in MDPs exhibit a striking resemblance to the state changes of spiking neurons. Therefore, the key to solving the temporal mismatch problem is to utilize this similarity. 

We propose a novel SRL paradigm according to this similarity: within the framework of RL algorithms, the application of SNNs in a manner similar to RNNs (Figure~\ref{Fig::TemporalAlignmentParadigm}A), aligning each step of spiking neuron updates $\bm{u}_t \rightarrow \bm{u}_{t+1}$ with each state transition $s_t \rightarrow s_{t+1}$ in an MDP. The advantage of this approach lies in the natural utilization of the temporal structure of SNNs while significantly reducing the number of time steps required for the entire task. Figure~\ref{Fig::TemporalAlignmentParadigm}A and ~\ref{Fig::TemporalAlignmentParadigm}B show the proposed TAP and the detailed dynamics of spiking neurons.

\subsection{Gradient Analysis of Spiking Neurons}
In Section~\ref{sec::lif}, we introduce the basic computational unit of SNNs: LIF neurons. It uses a constant leaky factor $\beta$ to play the roles of forget gate and input gate. The use of this constant factor results in weaker temporal correlations in the neuron's internal state ($\bm{u}_t$), making it challenging to handle data with long-term dependencies. In order to enhance the memory capacity of SNNs in the temporal domain, we need to analyze and optimize the LIF neuron. 

According to Eq~\ref{eq::Update}-\ref{eq::Heaviside}, it can be observed that the LIF neuron has two internal states: membrane potential $\bm{u}^l_t$ and input current $\bm{c}^l_t$. To compare the importance of these two internal states in backpropagation, we need to calculate the gradient of the loss function for connection weight $\bm{W}^l$. Suppose $\mathcal{L}$ is the loss function that we would like to minimize. According to Eq~\ref{eq::Update}-\ref{eq::Heaviside} and the chain rule, its gradient is:
\begin{equation} \label{eq::Gradient}
    \begin{split}
        \frac{\partial \mathcal{L}}{\partial \bm{W}^l}=\sum_{t=1}^T \frac{\partial \mathcal{L}}{\partial \bm{o}^l_t} \frac{\partial \bm{o}^l_t}{\partial \bm{u}^l_t} \frac{\partial \bm{u}^l_t}{\partial \bm{W}^l}
    \end{split}
\end{equation}
$T$ is the total number of time steps. The first term $\frac{\partial \mathcal{L}}{\partial \bm{o}^l_t}$ in Eq~\ref{eq::Gradient} is determined by the decoder and loss function and does not affect the flow of the gradient. According to Eq~\ref{eq::Firing} and Eq~\ref{eq::SG}, the second term could be derived as:
\begin{equation} \label{eq::Term2}
    \begin{split}
        \frac{\partial \bm{o}^l_t}{\partial \bm{u}^l_t} = \frac{\partial \Theta(\bm{u}^l_t-\vartheta)}{\partial \bm{u}^l_t} = h'(\bm{u}^l_t-\vartheta)
    \end{split}
\end{equation}
Then we calculate the third term $\frac{\partial \bm{u}^l_t}{\partial \bm{W}^l}$:
\begin{equation} \label{eq::Term3}
    \begin{split}
        \frac{\partial \bm{u}^l_t}{\partial \bm{W}^l} = \frac{\partial \bm{u}^l_t}{\partial \bm{c}^l_t} \frac{\partial \bm{c}^l_t}{\partial \bm{W}^l} + \frac{\partial \bm{u}^l_t}{\partial \bm{u}^l_{t-1}} \frac{\partial \bm{u}^l_{t-1}}{\partial \bm{W}^l}
    \end{split}
\end{equation}
Bring Eq~\ref{eq::Update} into Eq~\ref{eq::Term3}:
\begin{equation} \label{eq::DetailTerm3}
    \begin{split}
        \frac{\partial \bm{u}^l_t}{\partial \bm{W}^l} = & [(1-\beta)+\beta \frac{\partial \hat{\bm{u}}^l_{t-1}}{\partial \bm{c}^l_t}]\frac{\partial \bm{c}^l_t}{\partial \bm{W}^l} + \\
        & \beta \frac{\partial [u_r \bm{o}^l_{t-1} + (1-\bm{o}^l_{t-1})\bm{u}^l_{t-1}]}{\partial \bm{u}^l_{t-1}} \frac{\partial \bm{u}^l_{t-1}}{\partial \bm{W}^l}
    \end{split}
\end{equation}
Obviously, $\frac{\partial \hat{\bm{u}}^l_{t-1}}{\partial \bm{c}^l_t} = 0$. Let \begin{equation} \label{eq::delta}
    \begin{split}
        \delta_t &= \frac{\partial \bm{\hat{u}}^l_t}{\partial \bm{u}_t^l} = \frac{\partial [u_r \bm{o}^l_t + (1-\bm{o}^l_t)\bm{u}^l_t]}{\partial \bm{u}^l_t}\\
        & = [(u_r-\bm{u}^l_t)h'(\bm{u}^l_t-\vartheta)+1-\bm{o}^l_t]
    \end{split}
\end{equation}
Then Eq~\ref{eq::DetailTerm3} can be simplified as:
\begin{equation} \label{eq::SimplifiedTerm3}
    \begin{split}
        \frac{\partial \bm{u}^l_t}{\partial \bm{W}^l} = (1-\beta)\frac{\partial \bm{c}^l_t}{\partial \bm{W}^l} + \beta \delta_{t-1} \frac{\partial \bm{u}^l_{t-1}}{\partial \bm{W}^l}
    \end{split}
\end{equation}
Continuing substitution $\frac{\partial \bm{u}^l_t}{\partial \bm{W}^l}$ leads to a closed-form expression:
\begin{equation} \label{eq::ClosedForm}
    \begin{split}
        \frac{\partial \bm{u}^l_t}{\partial \bm{W}^l} =& (1-\beta) \sum_{i=1}^t \beta^i \prod_{j=1}^i \delta_{t-j} \frac{\partial \bm{c}^l_{t-i}}{\partial \bm{W}^l} + \\
        & \prod_{k=1}^t \beta \delta_{t-k} + (1-\beta)\frac{\partial \bm{c}^l_t}{\partial \bm{W}^l} 
    \end{split}
\end{equation}
When the variable subscript $t=0$, its value is the artificially set initial value.

To analyze the main carriers of gradient, we propose the following theorem:
\begin{theorem}
\label{thm::gradient}
With the LIF model (Eq~\ref{eq::Update}-\ref{eq::Heaviside}) and the common parameter settings ($\beta=0.5, \vartheta=1, \alpha=2, u_r=0$), the gradient of loss function $\frac{\partial \mathcal{L}}{\partial \bm{W}^l}$ mainly depends on current $\frac{\partial \bm{c}}{\partial \bm{W}^l}$ when total time-step $T$ is large enough.
\end{theorem}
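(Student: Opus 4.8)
\emph{Proof plan.} The idea is to expand $\partial\bm u^l_t/\partial\bm W^l$ via its closed form (Eq~\ref{eq::ClosedForm}) and show that, with the stated hyper-parameters, \emph{every} pathway that transports gradient through the membrane-potential recurrence is a strict contraction. Consequently, as $T$ grows the surviving mass of $\partial\mathcal L/\partial\bm W^l$ is carried entirely by a geometrically weighted window of the current terms $\partial\bm c^l/\partial\bm W^l$, while the purely-recurrent contribution stays $O(1)$ and becomes negligible.

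The heart of the argument is a uniform bound on the per-step recurrence factor $\beta\delta_t$. Plugging $u_r=0$, $\vartheta=1$, $\alpha=2$ into Eq~\ref{eq::SG} and Eq~\ref{eq::delta} gives $h'(x)=(1+\pi^2x^2)^{-1}$ and
\[
\delta_t=
\begin{cases}
\,1-\dfrac{\bm u^l_t}{1+\pi^2(\bm u^l_t-1)^2}, & \bm o^l_t=0\ \ (\bm u^l_t<1),\\[2.2ex]
\,-\dfrac{\bm u^l_t}{1+\pi^2(\bm u^l_t-1)^2}, & \bm o^l_t=1\ \ (\bm u^l_t\ge1).
\end{cases}
\]
First I would bound this by elementary one-variable calculus. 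When $\bm o^l_t=0$: if $\bm u^l_t<0$ then $(\bm u^l_t-1)^2\ge(\bm u^l_t)^2$, so the fraction lies in $(-\tfrac1{2\pi},0)$; if $0\le\bm u^l_t<1$ then $\bm u^l_t<1+\pi^2(\bm u^l_t-1)^2$ forces the fraction into $[0,1)$; either way $\delta_t\in(0,\,1+\tfrac1{2\pi})$. When $\bm o^l_t=1$, setting $w=\bm u^l_t-1\ge0$ reduces $|\delta_t|$ to $(1+w)/(1+\pi^2w^2)$, whose maximum over $w\ge0$ is below $1.03$. Hence $\sup_t|\delta_t|<1.16<2$, and with $\beta=\tfrac12$ we obtain a uniform contraction factor $q:=\beta\sup_t|\delta_t|<1$.

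The rest is bookkeeping on Eq~\ref{eq::ClosedForm}. The coefficient multiplying a past current $\partial\bm c^l_{t-i}/\partial\bm W^l$ is $(1-\beta)\beta^i\prod_{j=1}^i\delta_{t-j}$, of magnitude at most $(1-\beta)q^i$, so these weights are summable and only an $O(1)$-length window of recent currents is non-negligible; the ``pure membrane'' term $\prod_{k=1}^t\beta\delta_{t-k}$ (the part that would transmit the fixed initial condition) has magnitude at most $q^t$. Substituting into Eq~\ref{eq::Gradient} and splitting $\partial\bm u^l_t/\partial\bm W^l$ into its current part and its membrane part, the total membrane contribution to $\partial\mathcal L/\partial\bm W^l$ is bounded by $\sum_{t=1}^T q^t$ times bounded factors, i.e.\ by $\tfrac{q}{1-q}$ times a constant, which stays bounded as $T\to\infty$, whereas the current part is a sum of $T$ generically non-cancelling terms and therefore grows with $T$. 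So for $T$ large enough the gradient is dominated by, i.e.\ ``mainly depends on'', the currents $\partial\bm c/\partial\bm W^l$.

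I expect the first estimate to be the only real obstacle: the tempting shortcut $|\,\bm u\,h'(\bm u-1)\,|\le1$ is actually \emph{false} (the quantity slightly exceeds $1$ near $\bm u\approx1.05$), so one genuinely needs the short maximisation above rather than a triangle-inequality bound. Everything downstream is routine geometric-series estimation. It is also worth stating explicitly the mild, generic assumption that the upstream factors $\frac{\partial\mathcal L}{\partial\bm o^l_t}\frac{\partial\bm o^l_t}{\partial\bm u^l_t}$ do not conspire to vanish or cancel, since otherwise ``mainly depends on'' is vacuous; and to note that the same contraction $q<1$ is precisely the exponential-forgetting behaviour of $\bm u_t$ that motivates replacing the constant $\beta$ by the learnable gates of GRSN.
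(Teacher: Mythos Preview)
Your proposal is correct and follows essentially the same route as the paper: both arguments expand $\partial\bm u^l_t/\partial\bm W^l$ via the closed form, establish a uniform bound $|\beta\delta_t|<1$ by a case split on $\bm o^l_t\in\{0,1\}$ and one-variable calculus on $u\mapsto u/(1+\pi^2(u-1)^2)$, and conclude that $\prod_k\beta\delta_{t-k}\to0$ so only the current terms survive. Your constants are slightly looser than the paper's (it obtains $|\beta\delta_t|\lesssim0.5124$ by locating the exact extrema, whereas you get $q<0.58$), but you are more explicit than the paper about summing over $t$ in Eq~\ref{eq::Gradient} and about the implicit non-cancellation assumption on the upstream factors.
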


\begin{proof}
The first term in Eq~\ref{eq::Gradient} depends on the decoding method, and the second term in Eq~\ref{eq::Gradient} depends on the SG function. Both of them do not affect the direction of gradient flow. Therefore, the third term ($\frac{\partial \bm{u}^l_t}{\partial \bm{W}^l}$) is the most significant. By analyzing the monotonicity (presented in the appendix) of Eq~\ref{eq::ClosedForm}, it can be found that when $T$ is large, $\prod_{i=1}^T \beta \delta_t$ tends to $0$. Hence, the last term $(1-\beta)\frac{\partial \bm{c}^l_t}{\partial \bm{W}^l}$ in Eq~\ref{eq::ClosedForm} determines the direction of the gradient.
\end{proof}

To further verify \textbf{Theorem}~\ref{thm::gradient}, we conducted experiments in \textit{CartPole-V} (see Section~\ref{sec::Experiments}) environment and visualized the parameter distribution during the training process (Figure~\ref{Fig::ParameterDistribution}). From figure~\ref{Fig::ParameterDistribution}, the maximum absolute value of the term $\beta \delta_t$ is less than $1$, and most of the membrane potential values are distributed around $0$. Hence, $\prod_{i=1}^T \beta \delta_t$ will tend to $0$ when $T$ is large enough. This result is consistent with \textbf{Theorem}~\ref{thm::gradient}.

According to \textbf{Theorem}~\ref{thm::gradient}, the gradient of the loss function $\mathcal{L}$ is mainly transmitted through the input current $\bm{c}$. Therefore, adding gated recurrent connections to the input current $\bm{c}$ can enhance the memory capability of spiking neurons more effectively than membrane potential $\bm{u}$.

\begin{figure}[t]
    \centerline{\includegraphics[width=1.0\linewidth]{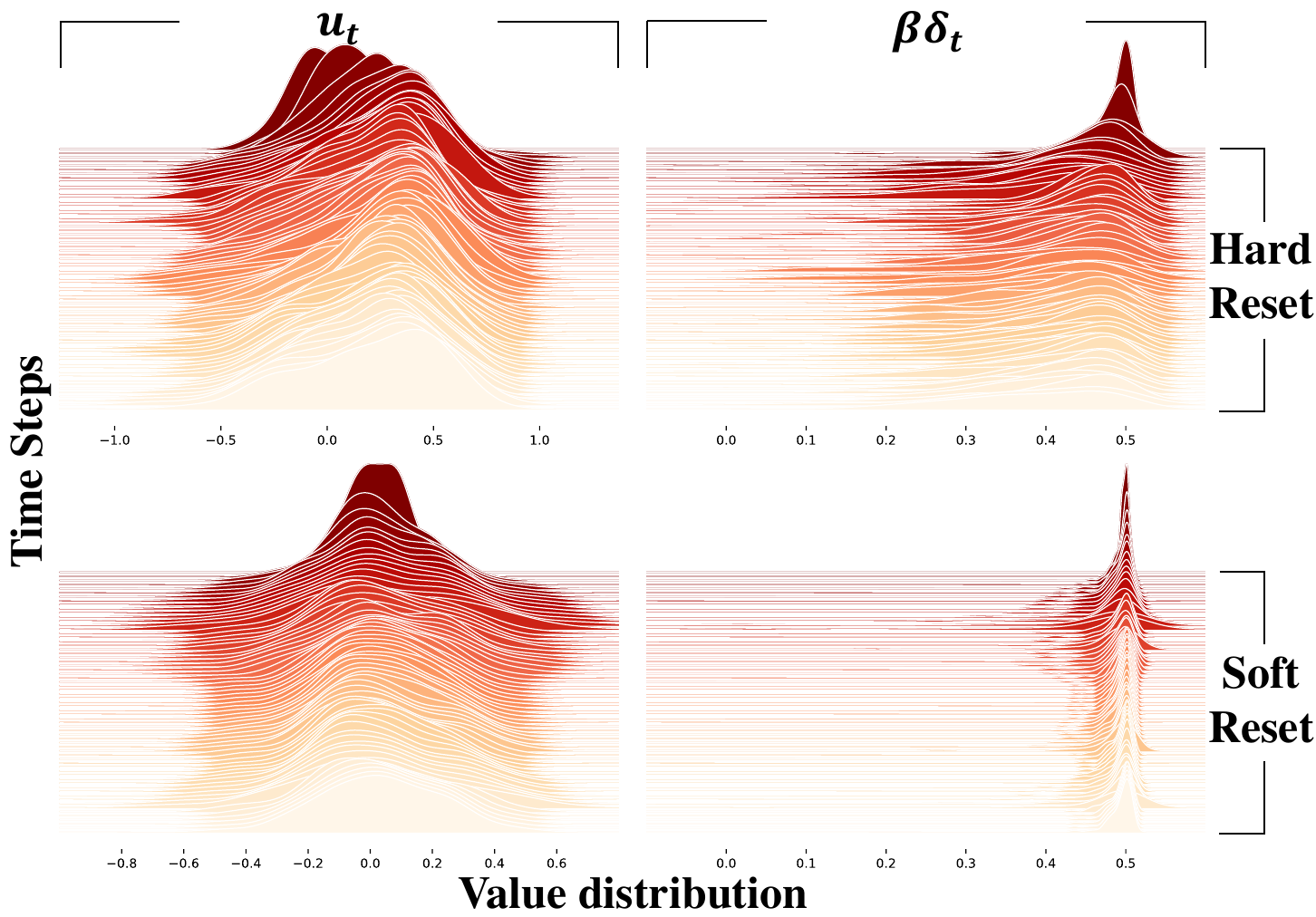}}
	\caption{Distributions of membrane potential $\bm{u}_t$ and the term $\beta \delta_t$. The y-axis represents the time step, and the x-axis represents the value of the parameter. The probability density curve corresponding to each time step represents the parameter distribution of all neurons in the single-layer network.}
	\label{Fig::ParameterDistribution}
\end{figure}

\begin{table*}[t]
	\centering
    \begin{threeparttable}
	\begin{tabular}{c c c c c c}
    \toprule
    \multirow{2}{*}{\diagbox{\textbf{Net.}}{\textbf{Env.}}} & \multirow{2}{*}{\textbf{Learning Paradigm}} & \multicolumn{2}{c}{\textbf{CartPole}} & \multicolumn{2}{c}{\textbf{Pendulum}} \cr
    & & \multicolumn{1}{c}{-V} & \multicolumn{1}{c}{-P} & \multicolumn{1}{c}{-V} & \multicolumn{1}{c}{-P} \cr
    \midrule
    MLP \tnote{*} & Actor-Critic & 21.6 $\pm$ 1.49 & 21.2 $\pm$ 1.95 & -1502.8 $\pm$ 61.09 & -1380.1 $\pm$ 47.45 \cr
    GRU \tnote{*} & RNN-RL & \textbf{200.0} $\pm$ \textbf{0.00} & 196.7 $\pm$ 6.30 & \textbf{-181.3} $\pm$ \textbf{15.47} & -203.5 $\pm$ 16.36 \cr
    LIF & \textbf{TAP} & \textbf{200.0} $\pm$ \textbf{0.00} & 152.2 $\pm$ 21.21 & -942.4 $\pm$ 92.19 & -824.8 $\pm$ 178.80 \cr
    \midrule
    \textbf{GRSN} & \textbf{TAP} & \textbf{200.0} $\pm$ \textbf{0.00} & \textbf{199.9} $\pm$ \textbf{0.20} & -189.4 $\pm$ 17.64 & \textbf{-195.8} $\pm$ \textbf{43.99}\cr
    \bottomrule
	\end{tabular}
    \begin{tablenotes}
        \footnotesize
        \item[*] Reproduction results of \cite{DBLP:conf/icml/NiES22}.
    \end{tablenotes}
    \end{threeparttable}
    \caption{Performance Comparison on PO Environment}
	\label{Tab::POMDP}
\end{table*}

\subsection{Gated Recurrent Spiking Neurons}
Optimizing the dynamics of neurons and improving memory ability is mainly divided into three steps. The first step is to change the decay factor $\beta$ to a learnable parameter, which can be jointly optimized with the entire network. The learnable decay factor can dynamically adjust the forgetting ability of neurons during the training process and adaptively find the optimal value \cite{DBLP:journals/tnn/RathiR23}. Then, we use a soft reset instead of a hard reset (\textbf{Theorem}~\ref{thm::gradient} still holds in the soft reset setting; see Figure~\ref{Fig::ParameterDistribution} and appendix for details). Compared to the hard reset method, soft reset can reduce temporal information loss, and it has better performance when the network is shallow \cite{DBLP:journals/corr/abs-2006-04436}. Finally, we added recurrent connections and adopted gating functions to enhance the temporal correlation of the neuron's internal states $\bm{c}$.

\begin{algorithm}[tb]
    \caption{GRSN with temporal alignment paradigm} \label{alg::GRSN}
\begin{algorithmic}[1]
    \STATE {\bfseries Input:} States sequence $<s_1,s_2,...,s_T>$
    \STATE {\bfseries Parameter:} Total steps $T$, total layers $L$
    \STATE {\bfseries Output:} Values sequence $<q_1,q_2,...,q_T>$
    \STATE initialize the hidden states $\bm{u}^1_0=\hat{\bm{u}}^1_0=\bm{0}, \bm{c}^1_0=\bm{0}$
    \FOR{$t = 1$ to $T$}
    \STATE initialize the input spike $\bm{o}^0_t=encoder(s_t)$
    \FOR{$l = 1$ to $L$}
    \STATE update $\bm{c}^l_t$ and $\bm{u}^l_t$ // Eq~\ref{eq::GRSN_Update}, Eq~\ref{eq::Update}
    \STATE firing and computing output $\bm{o}^l_t$ // Eq~\ref{eq::Firing}
    \STATE reset the potential $\hat{\bm{u}}_t^l$ // Eq~\ref{eq::SoftReset}
    \ENDFOR
    \STATE compute the values $q_t=decoder(\bm{o}^L_t)$
    \ENDFOR
    \STATE return $<q_1,q_2,...,q_T>$
\end{algorithmic}
\textbf{Notes}: The encoder and decoder are implemented by MLP, similar to the embedding layers in RL.
\end{algorithm}

We named the proposed neuron model gated recurrent spiking neurons (GRSN). Its membrane potential update rules, firing rules, and definitions of Heaviside function are consistent with those of LIF neurons (see Eq~\ref{eq::Update}, Eq~\ref{eq::Firing} and Eq~\ref{eq::Heaviside}). Different from LIF neurons (Eq.~\ref{eq::Reset}), the reset rule of GRSN is soft reset:
\begin{equation} \label{eq::SoftReset}
    \begin{split}
        \bm{\hat{u}}^l_t =\bm{u}^l_t - \vartheta \bm{o}^l_t
    \end{split}
\end{equation}
GRSN also adds recurrent connections at the current level:
\begin{equation} \label{eq::GRSN_Update}
    \begin{split}
        \bm{c}^l_t = \mathscr{F}(\bm{o}^{l-1}_t) \odot \bm{c}^l_{t-1} + [1-\mathscr{F}(\bm{o}^{l-1}_t)] \odot \mathscr{I}(\bm{o}^{l-1}_t)
    \end{split}
\end{equation}
where $\mathscr{F}(\cdot)$ is the forgetting gate that regulates the proportion of forget and input, and $\mathscr{I}(\cdot)$ is the input gate that processes input signals. Both $\mathscr{F}(\cdot)$ and $\mathscr{I}(\cdot)$ are composed of a fully connected (FC) layer and an activation function. Their detailed definitions are shown as follows:
\begin{equation} \label{eq::Gate}
    \begin{split}
        \mathscr{F}(\bm{x}) &= \sigma (\bm{W}_f\bm{x}+\bm{b}_f)\\
        \mathscr{I}(\bm{x}) &= ReLU(\bm{W}_i\bm{x}+\bm{b}_i)
    \end{split}
\end{equation}
where $\sigma$ is the sigmoid function.

Figure~\ref{Fig::TemporalAlignmentParadigm}C shows the dynamics of GRSN in the temporal alignment paradigm. Compared with Figure~\ref{Fig::TemporalAlignmentParadigm}B, red lines indicate the recurrent connections. The pseudocode of GRSN with the temporal alignment paradigm is shown in Algorithm~\ref{alg::GRSN}.

\section{Experiments} \label{sec::Experiments}

In the TAP, SNN behaves more like an RNN rather than a traditional DNN. Spiking neurons calculate the value function by accumulating information from multiple previous consecutive states. This configuration is often employed in RL to address POMDPs. Consequently, to validate the effectiveness and versatility of the proposed GRSN, we conducted experiments in both PO environments and multi-agent cooperative environments. The detailed parameters and settings of the experiment can be found in the appendix.

\subsection{Partially Observable Classic Control Tasks}
\subsubsection{Enviroments \& Experimental Settings}
The \textit{Pendulum} and \textit{CartPole} tasks are classic control tasks for evaluating RL algorithms. The cart-pole problem described in \cite{6313077} aims to balance the pole by applying forces in the left and right directions on the cart (Figure~\ref{Fig::Environment}A). The inverted pendulum swingup problem is based on the classic problem in control theory. Its goal is to apply torque on the free end to swing it into an upright position, with its center of gravity right above the fixed point (Figure~\ref{Fig::Environment}A). We conducted experiments in PO cases of these two control tasks \cite{DBLP:conf/iclr/HanDT20}, in which only velocities or positions could be observed. We use suffix names (-P/-V) to distinguish these two settings.

We follow the experimental settings proposed in \cite{DBLP:conf/icml/NiES22} and use SAC \cite{DBLP:conf/icml/HaarnojaZAL18} for discrete environments and TD3 \cite{DBLP:conf/icml/FujimotoHM18} for continuous environments. For networks, we use single-layer gated recurrent units (GRU) \cite{DBLP:journals/corr/ChungGCB14} for RNNs, two fully connected layers for MLP, and single-layer LIF/GRSN for SNNs. We conducted independent experiments on five different random seeds for each method and tested each final model ten times to eliminate the interference of randomness. Experiment details can be found in the appendix.

\begin{table*}[t]
	\centering
    \begin{threeparttable}
	\begin{tabular}{c c c c c c}
    \toprule
    \multirow{2}{*}{\textbf{Senarios}} & \multirow{2}{*}{\textbf{Difficulty}} & \multicolumn{2}{c}{\textbf{QMIX-GRU} \tnote{\dag}} & \multicolumn{2}{c}{\textbf{QMIX-GRSN}} \cr
    & & \multicolumn{1}{c}{\textbf{Mean reward}} & \multicolumn{1}{c}{\textbf{Win rate (\%)}} & \multicolumn{1}{c}{\textbf{Mean reward}} & \multicolumn{1}{c}{\textbf{Win rate (\%)}} \cr
    \midrule
    \textit{8m} & Easy & 19.8 $\pm$ 0.26 & 97.6 $\pm$ 3.49 & \textbf{20.0} $\pm$ \textbf{0.12} & \textbf{99.4} $\pm$ \textbf{1.48} \cr
    \textit{2s3z} & Easy & \textbf{19.9} $\pm$ \textbf{0.18} & \textbf{97.9} $\pm$ \textbf{2.84} & \textbf{19.9} $\pm$ \textbf{0.20} & 97.0 $\pm$ 3.57\cr
    \textit{8m\_vs\_9m} & Hard & 19.4 $\pm$ 0.69 & 91.6 $\pm$ 8.73 & \textbf{19.6} $\pm$ \textbf{0.35} & \textbf{94.6} $\pm$ \textbf{4.54}\cr
    \textit{3s\_vs\_5z} & Hard & 21.0 $\pm$ 0.44 & \textbf{97.1} $\pm$ \textbf{3.46} & \textbf{21.4} $\pm$ \textbf{0.60} & 93.0 $\pm$ 5.48\cr
    \textit{27m\_vs\_30m} & Super Hard & 19.2 $\pm$ 0.51 & 79.4 $\pm$ 12.19 & \textbf{19.9} $\pm$ \textbf{0.14} & \textbf{96.4} $\pm$ \textbf{3.74}\cr
    \textit{MMM2} & Super Hard & 17.6 $\pm$ 3.76 & 75.2 $\pm$ 37.93 & \textbf{18.5} $\pm$ \textbf{0.73} & \textbf{83.6} $\pm$ \textbf{8.17}\cr
    \bottomrule
	\end{tabular}
    \begin{tablenotes}
        \footnotesize
        \item[\dag] Reproduction results of \cite{hu2021rethinking}.
    \end{tablenotes}
    \end{threeparttable}
    \caption{Experimental results in SMAC environment}
	\label{Tab::MARL}
\end{table*}

\begin{figure}[t]
    \centerline{\includegraphics[width=1.0\linewidth]{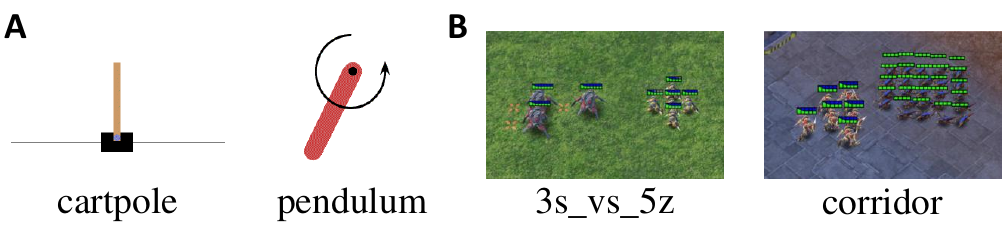}}
	\caption{\textbf{A.} Diagram of the pendulum and cartpole problem. The agent needs to apply appropriate forces to control the system to reach the target state, in order to get rewards. \textbf{B.} Game scenarios of two different SMAC maps. Agents need to defeat enemies to achieve high rewards and win the game.}
	\label{Fig::Environment}
\end{figure}

\begin{figure*}[t]
    \centerline{\includegraphics[width=1.0\linewidth]{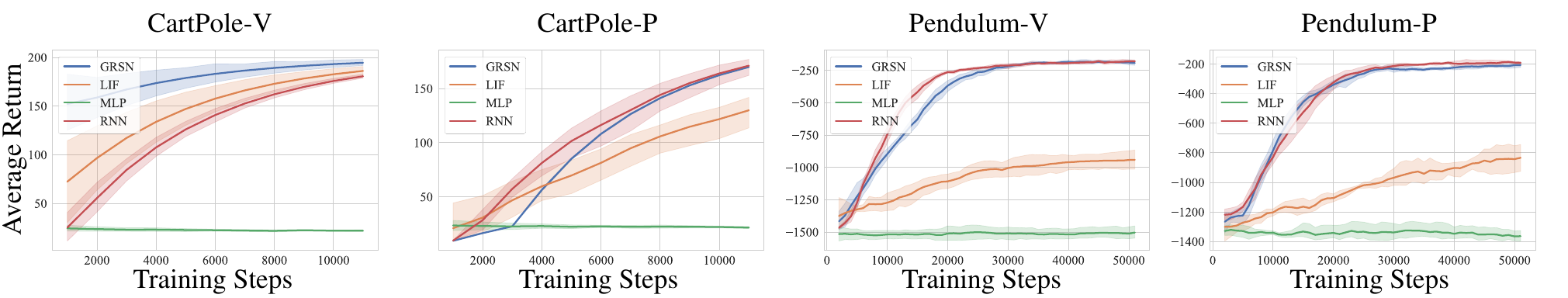}}
	\caption{Results of classic partial observable control tasks. The shaded region represents a standard deviation of average evaluation over five different seeds, and the learning curves are smoothed for visual clarity.}
	\label{Fig::POMDP}
\end{figure*}

\subsubsection{Results} 
To demonstrate the superiority of the TAP and GRSN in multiple aspects, we selected three different networks and paradigm settings as baselines (Table~\ref{Tab::POMDP}). From the results, TAP effectively utilizes temporal information, and its performance far exceeds that of traditional algorithms using MLP. On the other hand, GRSN enhances the temporal correlation of spiking neurons, thereby further improving performance. The combination of TAP and GRSN equals or even surpasses the state-of-the-art RNN-RL method. From the learning curve (Figure~\ref{Fig::POMDP}), it can be seen that the convergence rate of GRSN is slightly weaker than that of traditional RNNs in some environments due to the use of discrete spikes as information carriers, but in the end, the performance of the two is very similar. LIF and MLP, which are baselines, are weaker than GRSN and RNNs in terms of temporal correlation, so their performance and convergence rate are far inferior to the other two networks.

\subsection{StarCraft Multi-Agent Challenge}
\subsubsection{Enviroments \& Experimental Settings}
StarCraft Multi-Agent Challenge (SMAC) is a benchmark environment for evaluating MARL algorithms \cite{DBLP:conf/atal/SamvelyanRWFNRH19}. SMAC is based on the popular real-time strategy (RTS) game StarCraft \uppercase\expandafter{\romannumeral2}, and it contains multiple micro StarCraft \uppercase\expandafter{\romannumeral2} scenarios (Figure~\ref{Fig::Environment}B). In SMAC, the overall goal is to maximize the win rate for each battle scenario. Each agent in SMAC has a circular field of view centered on itself. With such a limited field of view, each agent is faced with a partially observable environment. The action space is discrete and includes \textit{move[direction]}, \textit{attack[enemy\_id]}, \textit{stop}, and \textit{no-op}. Healing units use \textit{heal[agent\_id]} actions instead of \textit{attack[enemy\_id]}. The SMAC reward includes several parts: hit-point damage dealt and enemy units killed, together with a special bonus for winning the battle.

The QMIX \cite{DBLP:conf/icml/RashidSWFFW18} is a classic value-based multi-agent collaboration algorithm. QMIX adopts the centralized training distributed execution (CTDE) mode, and each agent uses RNNs as main networks. Therefore, the proposed GRSN and TAP can be well adapted to the QMIX algorithm. We follow the experimental and parameter settings of QMIX and select SMAC maps of different difficulty levels for the experiment. Detailed parameters and settings are also listed in the appendix.

\subsubsection{Results}
To evaluate the performance of the method more comprehensively, we selected several SMAC maps with different difficulty levels for experiments. We conducted five independent experiments in each environment, training 10 million steps per experiment. We use the GRU-based QMIX algorithm as the baseline, and the experimental results are shown in Table~\ref{Tab::MARL}. From the experimental results, it can be seen that the GRSN-based QMIX algorithm surpasses the original GRU-based QMIX algorithm in most environments, which proves that GRSN achieves or even exceeds GRU in the processing of temporal information. In addition, it is not difficult to find that the advantages of GRSN are more obvious in difficult scenarios. This also proves that GRSNs using spikes have better robustness than traditional RNNs.
\begin{table*}[t]
	\centering
    \begin{threeparttable}
	\begin{tabular}{c c c c c c c c c}
    \toprule
    \textbf{Enviroment} & \multicolumn{4}{c}{\emph{Pendulum-P}} & \multicolumn{4}{c}{\emph{SMAC-8m}} \cr
    \cmidrule(r){1-1} \cmidrule(r){2-5} \cmidrule(r){6-9}
    \multirow{2}{*}{\textbf{Paradigm}} & \multicolumn{2}{c}{\textbf{w/ TAP}} & \multicolumn{2}{c}{\textbf{w/o TAP} \tnote{\dag}} & \multicolumn{2}{c}{\textbf{w/ TAP}} & \multicolumn{2}{c}{\textbf{w/o TAP} \tnote{\dag}} \cr
    \cmidrule(r){2-3} \cmidrule(r){4-5} \cmidrule(r){6-7} \cmidrule(r){8-9}
    & \textbf{LIF} & \textbf{GRSN} & \textbf{LIF} & \textbf{GRSN} & \textbf{LIF} & \textbf{GRSN} & \textbf{LIF} & \textbf{GRSN} \cr
    \midrule
    \textbf{Mean Reward $\uparrow$} & -824.8 & \textbf{-195.8} & -1069.9 & -235.4 & 15.3 & \textbf{19.9} & 16.9 & 19.7 \cr
    \textbf{Standard Deviation $\downarrow$} & 178.80 & \textbf{43.99} & 65.85 & 44.16 & 1.59 & \textbf{0.02} & 1.38 & 0.31\cr
    \textbf{Test Win Rate $\uparrow$} (\%) & - & - & - & - & 50.4 & \textbf{99.4} & 64.6 & 96.5 \cr
    \textbf{Total Time Steps}\tnote{*} $\downarrow$ & \textbf{64} & \textbf{64} & 256 & 256 & \textbf{26} & \textbf{26} & 104 & 104 \cr
    \textbf{Training Time $\downarrow$} (h) & 3.55 & \textbf{3.42} & 7.26 & 10.18 & 12.91 & \textbf{12.76} & 19.52 & 18.57\cr
    \bottomrule
	\end{tabular}
    \begin{tablenotes}
        \footnotesize
        \item[\dag] For all experiments without TAP, SNNs use repeated input and rate coding, and the time step is set to $T=4$.
        \item[*] In the SMAC environment, the average episode length is used instead of time steps.
    \end{tablenotes}
    \end{threeparttable}
    \caption{Ablation Study}
	\label{Tab::AblationStudy}
\end{table*}

\subsection{Ablation Study}
In Section~\ref{sec::Method}, we propose the TAP that can solve the temporal mismatch problem and greatly reduce time steps, and the GRSN that can enhance the temporal association of spiking neurons and improve performance. To demonstrate the impact of these two on the training speed and algorithm performance more intuitively, we conducted multiple groups of ablation experiments. We conducted experiments in both classical control tasks and SMAC environments. Similar to the previous setting, we conducted five independent experiments at each of the different settings and averaged them as the final results (Table~\ref{Tab::AblationStudy}).

\subsubsection{Temporal Alignment Paradigm}
The TAP solves the mismatch problem, thus significantly reducing the number of time steps and training and inference time. From Table~\ref{Tab::AblationStudy}, the algorithm using TAP reduces the time step by $T$ times, where $T=4$ is the time window size of the basic spiking neuron. In addition, TAP has also shortened the training time by about half while maintaining performance, greatly improving the efficiency of algorithms. 

\subsubsection{GRSN vs. LIF}
In Table~\ref{Tab::AblationStudy}, we can compare the GRSN and LIF neurons. GRSN has better performance in different environmental and paradigm settings. However, due to the additional gating function, the training time of GRSN will be slightly longer than that of LIF. In terms of stability, GRSN also demonstrated the most stable performance, with the lowest standard deviation in five independent experiments.

\subsection{Energy Consumption Estimation}
SNNs use discrete spikes for information transmission, so their energy consumption is estimated differently than that of ANNs. We follow the convention of the neuromorphic computing community \cite{DBLP:conf/icml/WangJL0T23,DBLP:conf/ijcai/QinYT23,DBLP:journals/tnn/WuCZLLT23} by counting the total synaptic operations (SOP) to estimate the energy consumption of SNN models. Specifically, the SOP for SNNs correlates with the neurons’ firing rate, fan-out $f_{out}$, and time window size $T$:
\begin{equation} \label{eq::SOP_SNN}
    \begin{split}
        SOP=\sum_{t=1}^T \sum_{l=1}^{L-1} \sum_{j=1}^{N^l} f_{out}^l[j] \bm{o}_t^l[j]
    \end{split}
\end{equation}
where $L$ is the total number of layers and $N^l$ is the number of neurons in layer $l$, $f_{out}$ is the number of outgoing connections, and $\bm{o}_t^l[j]$ denotes the output spike of the $j$-th neuron in layer $l$ at the $t$-th time step.

For ANNs, SOP is only related to network structure. It is defined as:
\begin{equation} \label{eq::SOP_ANN}
    \begin{split}
        SOP=\sum_{l=1}^{L} f_{in}^l N^l
    \end{split}
\end{equation}
where $f_{in}^l$ represents the number of incoming connections to each neuron in layer $l$, $L$ and $N^l$ are consistent with their definitions in SNNs.

SNNs use efficient addition operations, while ANNs use more expensive multiply-accumulate operations. According to \cite{DBLP:journals/corr/HanPTD15}, we measure 32-bit floating-point addition operations by $0.9pJ$ per operation and 32-bit floating-point multiply-accumulate operations by $4.6pJ$ per operation. Here, we randomly select 1024 samples to estimate the average SOP and energy consumption for SNNs. Table~\ref{Tab::EnergyEstimation} shows the results of different networks in different environments, and it can be seen that SNN can reduce energy consumption by up to about 3800 times compared to RNNs. The total energy consumption is also reduced by about 50\%, which effectively reduces the agent's resource requirements.

\begin{table}[t]
	\centering
	\begin{tabular}{c c c c c}
    \toprule
    \multirow{2}{*}{\diagbox{\textbf{Env.}}{\textbf{Net.}}} & \multicolumn{2}{c}{\textbf{GRU} (K\textit{pJ})}  & \multicolumn{2}{c}{\textbf{GRSN} (K\textit{pJ})} \cr
    & \textbf{MLP} & \textbf{RNN} & \textbf{MLP} & \textbf{SNN} \cr
    \midrule
    PO-Ctrl. & 536.06 & 628.82 & \textbf{498.34} & \textbf{7.92} \cr
    SMAC & \textbf{50.97} & 114.82 & 96.67 & \textbf{0.03}\cr
    \midrule
    Saved & \multicolumn{2}{c}{-} & \multicolumn{2}{c}{$\sim$ \textbf{49.1}\%} \cr
    \bottomrule
	\end{tabular}
    \caption{Energy Consumption Estimation}
	\label{Tab::EnergyEstimation}
\end{table}

\section{Conclusion}
In this paper, we introduce the temporal mismatch problem in SRL algorithms and propose the temporal alignment paradigm (TAP) and gated recurrent spiking neurons (GRSN) to solve this problem. Specifically, TAP allows the single-step update of spiking neurons to correspond to the single-step decision-making of MDP to solve the mismatch problem and significantly reduces time steps. On the other hand, GRSN enhances the temporal correlation of spiking neurons, thereby improving their ability to capture historical information and compensating for the performance degradation caused by time step reduction. Extensive experiments show that TAP can reduce the time steps by several times and half the training time, which effectively solves the temporal mismatch problem. At the same time, GRSN can also improve the performance of spiking neurons, making SNNs equal to or even exceed the traditional RNNs. Energy estimation proves that our method can reduce the energy consumption by about 50\%, which provides a new algorithm option for the control task in the energy-constrained scenario.
\bibliographystyle{named}
\bibliography{GRSN}
\newpage
\appendix
\onecolumn
\section{Background}
In reinforcement learning (RL), the process of interacting with environments and changing the state through different actions is often described by Markov Decision Process (MDP) $(\mathcal{S},\mathcal{A},p_M,r,\gamma)$, with state space $\mathcal{S}$, action space $\mathcal{A}$, discount factor $\gamma$, and transition dynamics $p_M(s'|s,a)$. At every step, the agent change its state from $s$ to $s'$ by perform action $a$ and receives a reward $r(s,a,s')$. The goal of RL is to allow the agent to maximize the expectation of accumulating rewards $R_t=\sum_{i=t+1}^\infty \gamma^ir(s_i,a_i,s_{i+1})$.

The agent selects actions with respect to a policy $ \pi: \mathcal{S} \rightarrow \mathcal{A}$. The corresponding value function $Q^\pi(s,a)=\mathbb{E}_\pi [R_t|s,a]$, the expected accumulate reward of agent for taking action $a$ in state $s$. Rewriting the value function $Q^\pi(s,a)$ into an iterative form, we can get the Bellman equation:
\begin{equation}
    Q^\pi(s,a)=\mathbb{E}_\pi [r+\gamma Q^\pi(s',a')]
\end{equation}

Then we can define the optimal value function:
\begin{equation}
    Q^*(s,a)= \max_\pi Q^\pi(s,a)
\end{equation}

The policy $\pi^*$ that achieves the optimal value function is the optimal policy. It can be obtained through greedy action choices. For high-dimensional control tasks, the value function can be fitted by deep neural networks (DNNs), which is the core idea of DRL.
\section{Proof} \label{sec::proof}
According to the proof of Theorem~\ref{thm::gradient}, the third term in Eq~9 ($\frac{\partial \bm{u}^l_t}{\partial \bm{W}^l}$) determines the direction of gradient flow. $\frac{\partial \bm{u}^l_t}{\partial \bm{W}^l}$ can be calculated as:
\begin{equation} \label{eq::dudw}
    \begin{split}
        \frac{\partial \bm{u}^l_t}{\partial \bm{W}^l} = (1-\beta) \sum_{i=1}^t \beta^i \prod_{j=1}^i \delta_{t-j} \frac{\partial \bm{c}^l_{t-i}}{\partial \bm{W}^l} + \prod_{k=1}^t \beta \delta_{t-k} + (1-\beta)\frac{\partial \bm{c}^l_t}{\partial \bm{W}^l} 
    \end{split}
\end{equation}
In Eq~\ref{eq::dudw}, both the first and third terms contain $\frac{\partial \bm{c}}{\partial \bm{W}}$, which is consistent with the assumption that the gradient mainly flows along the current. Therefore, we only need to analyze the second term in Eq~\ref{eq::dudw}.
\subsection{Hard Reset} \label{sec::hardrst}
When the neuron model adopted hard reset, the $\delta_t$ can be calculated as:
\begin{equation} \label{eq::hardresetdelta}
    \begin{split}
        \delta_t = \frac{\partial \bm{\hat{u}}^l_t}{\partial \bm{u}_t^l} =\frac{\partial [u_r \bm{o}^l_t + (1-\bm{o}^l_t)\bm{u}^l_t]}{\partial \bm{u}^l_t} = [(u_r-\bm{u}^l_t)h'(\bm{u}^l_t-\vartheta)+1-\bm{o}^l_t]
    \end{split}
\end{equation}
so the second term in Eq~\ref{eq::dudw} can be simplified as:
\begin{equation} \label{eq::SecondTerm}
    \begin{split}
        \prod_{k=1}^t \beta \delta_{t-k} = \prod_{k=1}^t \beta [(u_r-\bm{u}^l_{t-k})h'(\bm{u}^l_{t-k}-\vartheta)+1-\bm{o}^l_{t-k}] \\
    \end{split}
\end{equation}
where $h'(x)$ is the surrogate gradient function:
\begin{equation} \label{eq::SG2}
    \begin{split}
        h'(x) &= \frac{\alpha}{2[1 + (\frac{\pi}{2}\alpha x)^2]}
    \end{split}
\end{equation}
Based on the experimental parameter settings $\bm{u}_r=0, \vartheta=1, \alpha=2, \beta = 0.5$, we can simplify the multiplication term:
\begin{equation} \label{eq::SimplifyTerm}
    \begin{split}
        & \beta [(u_r-\bm{u}^l_{t})h'(\bm{u}^l_{t}-\vartheta)+1-\bm{o}^l_{t}] \\
        &= \frac{1}{2} [1-\bm{o}^l_{t}-\bm{u}^l_{t}h'(\bm{u}^l_{t}-1)] \\
        &= \frac{1}{2}[1-\bm{o}^l_{t}-\frac{\bm{u}^l_{t}}{1+\pi^2(\bm{u}^l_{t}-1)^2}] \\
    \end{split}
\end{equation}
When $\bm{u}^l_{t} \geq \vartheta=1$, neurons will emit spikes ($\bm{o}^l_{t}=1$). Eq~\ref{eq::SimplifyTerm} can be further simplified as:
\begin{equation} \label{eq::FurtherSimplifyTerm}
    \begin{split}
        &\frac{1}{2}[1-\bm{o}^l_{t}-\frac{\bm{u}^l_{t}}{1+\pi^2(\bm{u}^l_{t}-1)^2}] \\
        &= -\frac{1}{2}\frac{\bm{u}^l_{t}}{1+\pi^2(\bm{u}^l_{t}-1)^2}
    \end{split}
\end{equation}
Let $f(x)=-\frac{1}{2}\frac{x}{1+\pi^2(x-1)^2}$, then we can analyze the monotonicity of this item.

\begin{equation} \label{eq::Monotonicity1}
    \begin{split}
        f(x)&=-\frac{1}{2}\frac{x}{1+\pi^2(x-1)^2}\\
        f'(x)&=-\frac{1}{2}\frac{1+\pi^2(x-1)^2-2\pi^2(x-1)x}{[1+\pi^2(x-1)^2]^2} \\
        &=-\frac{1}{2}\frac{1+\pi^2-\pi^2x^2}{[1+\pi^2(x-1)^2]^2}
    \end{split}
\end{equation}
Let $f'(x)=0$, then:
\begin{equation} \label{eq::devfx}
    \begin{split}
        x&=\pm \sqrt{1+\frac{1}{\pi^2}}\\
        \Rightarrow max[f(x)]=f(-\sqrt{1+\frac{1}{\pi^2}})&=\frac{\frac{1}{2}\sqrt{1+\frac{1}{\pi^2}}}{2+2\pi^2+2\pi^2\sqrt{1+\frac{1}{\pi^2}}} \approx 0.0124\\
        \Rightarrow min[f(x)]=f(\sqrt{1+\frac{1}{\pi^2}})&=-\frac{\frac{1}{2}\sqrt{1+\frac{1}{\pi^2}}}{2+2\pi^2-2\pi^2\sqrt{1+\frac{1}{\pi^2}}} \approx -0.5124
    \end{split}
\end{equation}
According to Eq~\ref{eq::devfx}, we can find $|f(x)|<1$, which means $|\beta\delta_t|<1$. So the term $\prod_{k=1}^t \beta\delta_{t-k} \rightarrow 0$.

\noindent In another case, when $\bm{u}^l_{t}<\vartheta=1$, neurons will remain silent ($\bm{o}^l_{t}=0$). Then Eq~\ref{eq::SimplifyTerm} can be simplified as:
\begin{equation} \label{eq::SimplifyBrunch2}
    \begin{split}
        &\frac{1}{2}[1-\bm{o}^l_{t}-\frac{\bm{u}^l_{t}}{1+\pi^2(\bm{u}^l_{t}-1)^2}] \\
        &= \frac{1}{2}[1-\frac{\bm{u}^l_{t}}{1+\pi^2(\bm{u}^l_{t}-1)^2}]
    \end{split}
\end{equation}
Let $g(x)=\frac{1}{2}(1-\frac{x}{1+\pi^2(x-1)^2})$, then we can find:
\begin{equation} \label{eq::fxgx}
    \begin{split}
        g(x) = f(x) + \frac{1}{2}
    \end{split}
\end{equation}
Similarly, we can calculate the maximum and minimum value of $g(x)$:
\begin{equation} \label{eq::maxmingx}
    \begin{split}
        max[g(x)] &= max[f(x)] + \frac{1}{2} \approx 0.5124 \\
        min[g(x)] &= min[f(x)] + \frac{1}{2} \approx -0.0124 
    \end{split}
\end{equation}
This also satisfied $|g(x)|<1$, which means $\prod_{k=1}^t \beta\delta_{t-k} \rightarrow 0$.

\noindent Based on Eq~\ref{eq::SecondTerm}-\ref{eq::maxmingx}, we can conclude that the second term $\prod_{k=1}^t \beta\delta_{t-k}$ in Eq~\ref{eq::dudw} tends to $0$ when total time steps $T$ is large. So the Eq~\ref{eq::dudw} can be simplified as:
\begin{equation} \label{eq::Simplifieddudw}
    \begin{split}
        \frac{\partial \bm{u}^l_t}{\partial \bm{W}^l} = (1-\beta) \sum_{i=1}^t \beta^i \prod_{j=1}^i \delta_{t-j} \frac{\partial \bm{c}^l_{t-i}}{\partial \bm{W}^l} + (1-\beta)\frac{\partial \bm{c}^l_t}{\partial \bm{W}^l} 
    \end{split}
\end{equation}
which means the gradient of loss function $\frac{\partial \mathcal{L}}{\partial \bm{W}^l}$ mainly depends on current $\frac{\partial \bm{c}}{\partial \bm{W}^l}$.

\subsection{Soft Reset} \label{sec::softrst}
When the neuron model adopted soft reset, the $\delta_t$ should be calculated as:
\begin{equation} \label{eq::softrstdelta}
    \begin{split}
        \delta_t=\frac{\partial \bm{\hat{u}}^l_t}{\partial \bm{u}^l_t}=\frac{\partial (\bm{u}_t^l-\vartheta\bm{o}_t^l)}{\partial \bm{u}^l_t}=1-\vartheta h'(\bm{u}^l_t-\vartheta)
    \end{split}
\end{equation}
then the second term in Eq~\ref{eq::dudw} can be simplified as:
\begin{equation} \label{eq::SecondTerm_SoftRst}
    \begin{split}
        \prod_{k=1}^t \beta \delta_{t-k} = \prod_{k=1}^t \beta [1-\vartheta h'(\bm{u}^l_{t-k}-\vartheta)] \\
    \end{split}
\end{equation}
where $h'(x)$ is the same as Eq~\ref{eq::SG2}.

\noindent Similar to the hard reset mode, we bring the values ($\bm{u}_r=0, \vartheta=1, \alpha=2, \beta = 0.5$) of each parameter into the equation:
\begin{equation} \label{eq::multiterm}
    \begin{split}
        \beta [1-\vartheta h'(\bm{u}^l_t-\vartheta)] = \frac{1}{2}[1-\frac{1}{1+\pi^2(\bm{u}^l_t-1)^2}]
    \end{split}
\end{equation}
Let $p(x) = \frac{1}{2}[1-\frac{1}{1+\pi^2(x-1)^2}]$, then we can analyze this item:
\begin{equation} \label{eq::Monotonicity2}
    \begin{split}
        &\because \pi^2(x-1)^2 \geq 0 \\
        &\Rightarrow 1+ \pi^2(x-1)^2 \geq 1 \\
        &\Rightarrow 0<\frac{1}{1+\pi^2(x-1)^2} \leq 1 \\
        &\Rightarrow 0 \leq 1-\frac{1}{1+\pi^2(x-1)^2} < 1\\
        &\Rightarrow 0 \leq p(x) <\frac{1}{2}
    \end{split}
\end{equation}
Obviously, this also satisfied $|p(x)|<1$, so $\prod_{k=1}^t \beta\delta_{t-k} \rightarrow 0$.

\subsection{Curves of funtions}
To further confirm the correctness of the monotonicity of functions, we visualize those three functions in Figure~\ref{Fig::fx}.
\begin{figure}[t]
    \centerline{\includegraphics[width=1.0\linewidth]{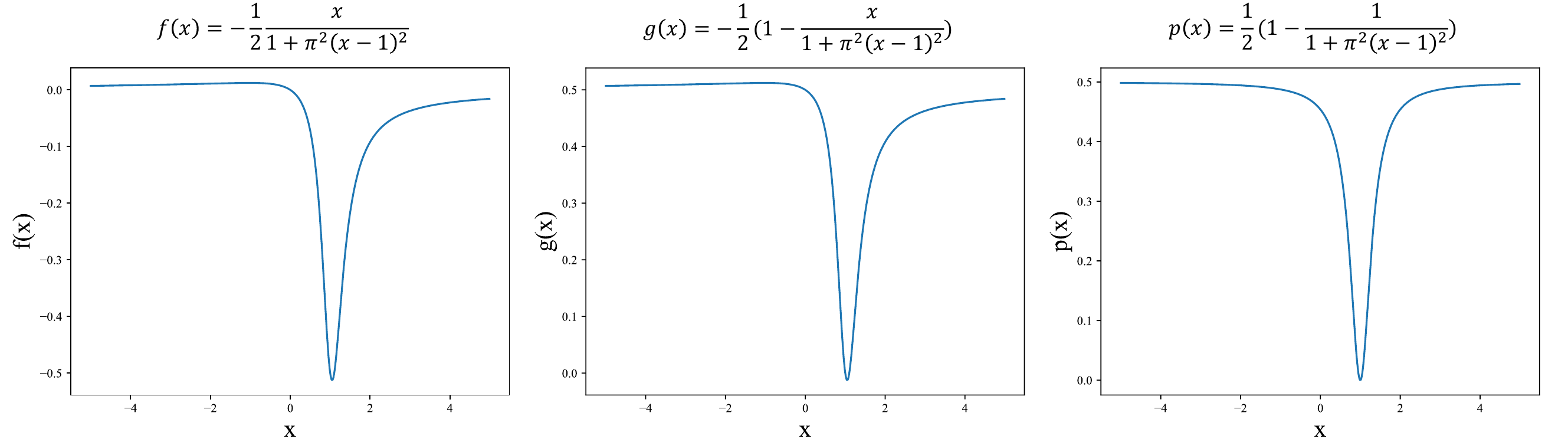}}
	\caption{Graph of function $f(x)$, $g(x)$ and $p(x)$.}
	\label{Fig::fx}
\end{figure}

\section{Experiment Details} \label{sec::ExperimentDetail}
\subsection{Experiment Settings}
The experiments are all carried out on a workstation equipped with AMD Ryzen Threadripper 3970X CPU running at 2.90 GHz, 128 GB of RAM, and four NVIDIA Geforce RTX 3090 GPUs. We use Python 3.8.13 to process datasets and PyTorch 1.12.1 to implement deep SNNs. We also use the spikingjelly 0.0.0.0.14 \cite{spikingjelly} framework to implement deep SNN networks and custom neurons.
Table~\ref{Tab::SNNParameters} shows various hyperparameters related to spiking neurons and spiking neural networks.

\begin{table}[h]
	\centering
	\begin{tabular}{c c c}
    \toprule
    \textbf{Parameter} & \textbf{Meanings} & \textbf{Value} \cr
    \midrule
    $\beta$ & Decay rate of membrane voltage & 0.5 \cr
    $\alpha$ & Width of surrogate function & 2 \cr
    $\vartheta$ & Potential threshold & 1 \cr
    $T$ & Default length of time window & 4 \cr
    $\bm{u}_r$ & Reset potential & 0 \cr
    \bottomrule
	\end{tabular}
    \caption{Value of SNN parameters}
	\label{Tab::SNNParameters}
\end{table}

\subsection{POMDP}
We used the initial versions (-v0) of \textit{CartPole} and \textit{Pendulum} environments for testing and packaged the environment with Openai gym \cite{1606.01540}. Table~\ref{Tab::TrainingParameters} and Table~\ref{Tab::POMDPParameters} shows all the hyperparameters in our POMDP experiments, and Figure~\ref{Fig::Network} shows the overview of network structure. The setting of each parameter and the design of the network structure all come from \cite{DBLP:conf/icml/NiES22}.

\begin{table}[h]
	\centering
	\begin{tabular}{c c c}
    \toprule
    \textbf{Parameter} & \textit{Pendulum} & \textit{CartPole} \cr
    \midrule
    Training steps & 50K & 10K\cr
    Meta-training steps & 250 & 50 \cr
    Max environment steps & 200 & 200\cr
    Evaluate interval steps & 5 & 1 \cr
    Agent inputs & Observation/Action/Reward & Observation/Action/Reward \cr
    \bottomrule
	\end{tabular}
    \caption{Hyperparameters of partial observable environments}
	\label{Tab::TrainingParameters}
\end{table}

\begin{table}[h]
	\centering
	\begin{tabular}{c c c}
    \toprule
    \textbf{Meanings} & \textbf{Parameter} & \textbf{Value} \cr
    \midrule
    \multirow{4}{*}{Hidden layer size} & Observation embedder & [32] \cr
    & Action embedder & [8]\cr
    & Reward embedder & [8]\cr
    & MLP & [128,128]\cr
    \midrule
    \multirow{6}{*}{RL parameters} & Optimizer & Adam \cite{kingma2014adam} \cr
    & Learning rate & 3e-4\cr
    & Discount factor $\gamma$ & 0.9\cr
    & Smoothing coef $\tau$ & 0.005\cr
    & Replay buffer size & 1e6\cr
    & Batch size & 32\cr
    \midrule
    \multirow{2}{*}{RNN} & Sequence length & 64 \cr
    & RNN hidden size & [128] \cr
    \bottomrule
	\end{tabular}
    \caption{Hyperparameters of algorithms and networks}
	\label{Tab::POMDPParameters}
\end{table}

For each set of experiments, we will save the results of the model testing during the training process, and ultimately choose the mean of the last five test results during the training process as the final result of this experiment. Then, we will select five different random seeds to independently repeat the experiment for five times, and the mean and standard deviation of the final five results will be the final result of this set of experiments.

\subsection{MARL}
Due to the weak parameter settings of the original QMIX algorithm, we referred to the paper \cite{hu2021rethinking} for parameter adjustments, and the specific parameters are shown in the Table~\ref{Tab::MARLParameters}. For each independent experiment, we use the average of the last fifty testing results to represent its performance. The final results of the algorithm in different environments are the average performance of five independent experiments.

\begin{table}[b]
	\centering
	\begin{tabular}{c c}
    \toprule
    \textbf{Parameter} & \textbf{Value} \cr
    \midrule
    Optimizer & Adam \cr
    Batch size & 128 \cr
    Q($\lambda$) & 0.6 \cr
    Attention heads & - \cr
    Mixing network size & 41K \cr
    $\epsilon$ Anneal steps & 100K \cr
    Rollout processes & 8 \cr
    \bottomrule
	\end{tabular}
    \caption{Hyperparameters of our QMIX algorithm}
	\label{Tab::MARLParameters}
\end{table}

\subsection{Energy Estimation}

\begin{figure}[t]
    \centerline{\includegraphics[width=0.8\linewidth]{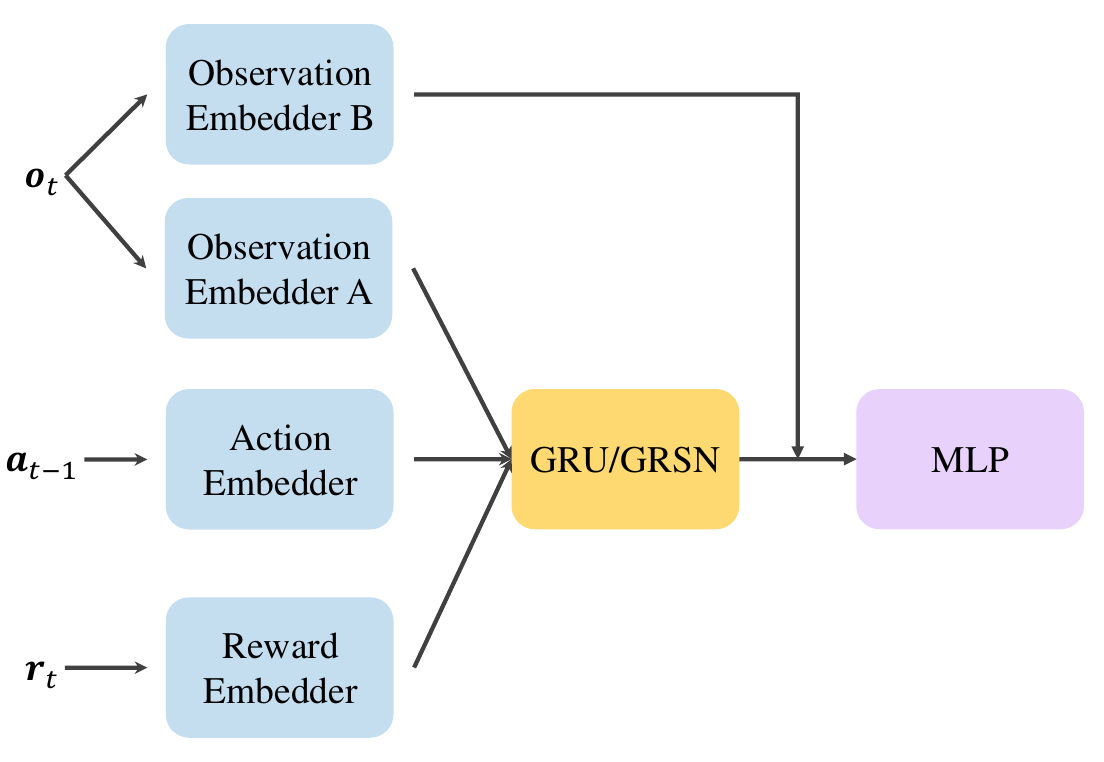}}
   \caption{The network architecture we used in POMDP experiments.}
   \label{Fig::Network}
\end{figure}

The main text only displays the final estimation results, and here we present the complete data in Table~\ref{Tab::FullyEnergyEstimation}.

\begin{table}[h]
	\centering
	\begin{tabular}{c c c c c c c c c}
    \toprule
    \multicolumn{2}{c}{\multirow{2}{*}{\textbf{Enviroment}}} & \multicolumn{3}{c}{\textbf{GRU}} & \multicolumn{3}{c}{\textbf{GRSN}} & \multirow{2}{*}{\textbf{Mixer}} \cr
    \cmidrule(r){3-5} \cmidrule(r){6-8}
    & & MLP (K) & RNN (K) & Energy (K\textit{pJ}) & MLP (K) & SNN (K) & Energy (K\textit{pJ}) & \cr
    \midrule
    \multirow{4}{*}{PO-Ctrl.} & \textit{Pendulum-P} & 116.39 & \multirow{4}{*}{136.70} & 1164.22 & 108.19 & 9.73 & \textbf{506.44} &\multirow{4}{*}{-}\cr
    & \textit{Pendulum-V} & 116.24 &  & 1163.56 & 108.05 & 11.16 & \textbf{507.07} & \cr
    & \textit{CartPole-P} & 116.74 &  & 1165.85 & 108.55 & 6.98 & \textbf{505.91} & \cr
    & \textit{CartPole-V} & 116.74 &  & 1165.85 & 108.55 & 7.33 & \textbf{505.60} & \cr
    \midrule
    \multirow{6}{*}{MARL} & \textit{8m} & 7.57 & \multirow{6}{*}{24.96} & 149.62 & 14.98 & 0.01 & \textbf{68.90} & 51.20 \cr
    & \textit{2s3z} & 6.99 &  & 146.97 & 22.85 & 0.01 & \textbf{105.11} & 35.75 \cr
    & \textit{8m\_vs\_9m} & 8.02 &  & 151.69 & 15.36 & 0.06 & \textbf{70.71} & 53.31\cr
    & \textit{3s\_vs\_5z} & 4.81 &  & 136.95 & 12.42 & 0.03 & \textbf{57.15} & 21.60\cr
    & \textit{27m\_vs\_30m} & 24.74 &  & 228.62 & 30.72 & 0.07 & \textbf{141.38} & 283.11\cr
    & \textit{MMM2} & 14.35 &  & 180.84 & 29.76 & 0.01 & \textbf{136.90} & 84.93\cr
    \bottomrule
	\end{tabular}
    \caption{Energy Estimation}
	\label{Tab::FullyEnergyEstimation}
\end{table}

\end{document}